\documentclass[fleqn,10pt]{wlscirep}
\usepackage[utf8]{inputenc}
\usepackage[T1]{fontenc}

\usepackage{amsmath}
\usepackage{amssymb}
\usepackage{amsthm}
\usepackage{amsfonts}

\theoremstyle{plain}
\newtheorem{theorem}{Theorem}[section]

\newtheorem{lemma}[theorem]{Lemma}
\newtheorem{corollary}[theorem]{Corollary}
\theoremstyle{definition}
\newtheorem{definition}[theorem]{Definition}

\theoremstyle{remark}

\usepackage{mathtools}
\usepackage{algorithm}
\usepackage[noend]{algpseudocode}
\usepackage{enumitem}
\usepackage{microtype}

\usepackage{multirow}
\usepackage{subcaption}

\title{On the Exact Algorithmic Extraction of Finite Tesselations Through Prime Extraction of Minimal 
Representative Forms}

\author{Sushish Baral$^1$, Paulo Garcia$^{2}$ and Warisa Sritriratanarak$^1*$}
\affil{$^1$Department of Computer Engineering, Chulalongkorn University, Bangkok, Thailand}
\affil{$^2$International School of Engineering, Chulalongkorn University, Bangkok, Thailand}
\affil{6773005021@student.chula.ac.th, paulo.g@chula.ac.th, warisa.s@chula.ac.th}
\affil{$^*$ Corresponding author: Warisa Sritriratanarak (warisa.s@chula.ac.th)}

\begin{abstract}
The identification of repeating patterns in discrete grids is rudimentary within symbolic reasoning, algorithm synthesis and structural optimization across diverse 
computational domains. Although statistical approaches targeting noisy data can approximately recognize patterns, symbolic analysis utilizing  deterministic extraction of periodic structures is underdeveloped. This paper aims to fill this gap by employing a hierarchical algorithm that discovers exact tessellations in finite planar grids, addressing the problem where multiple independent patterns may coexist within a hierarchical structure.
The proposed method utilizes composite discovery (dual inspection and breadth-first pruning) for identifying rectangular regions with internal repetition, normalization to a minimal representative form, and prime extraction (selective duplication and hierarchical memoization) to account for irregular dimensions and to achieve efficient computation time. We evaluate scalability on grid sizes from $2\times 2$ to $32 \times 32$, showing overlap detection on simple repeating tiles exhibits processing time under 1ms, while complex patterns which require exhaustive search and systematic exploration shows exponential growth. This algorithm provides deterministic behavior for exact, axis-aligned, rectangular tessellations,
addressing a critical gap in symbolic grid analysis techniques,  applicable to  puzzle solving reasoning tasks and identification of exact repeating structures in discrete symbolic domains. . 
\end{abstract}
\begin{document}

\flushbottom
\maketitle
%
%
\thispagestyle{empty}

\section{Introduction}

Tesselation \cite{10.1145/3597932} as an architectural and artistic feature has been documented for thousands of years \cite{chang2018application}. The mathematical study of its properties, particularly with regards to periodic tiling \cite{greenfeld2024counterexample} began in the late 19th century with the work of Fyodorov \cite{vzdimalova2020tessellation}, and aperiodic tiling was seminally explored by Penrose \cite{bursill1985penrose} (with the discovery of the first aperiodic monotile recently \cite{smith2023aperiodic}). Whilst a significant body of work on the mathematical properties and the generation of periodic tiles exist \cite{caceres2002aperiodic}, there is little work on the deterministic exact identification (i.e., \textit{extraction}) of tiling patterns. The field of pattern recognition \cite{7091016} (and, more recently, the sub-field of machine learning \cite{ngiam2010tiled}) has developed statistical extraction techniques \cite{valiente2004structural}, mostly aimed at tiling extraction from real-world fuzzy data (e.g., images \cite{pradella2008sat}); however, exact extraction as an algorithmic technique remains under-developed. This is fueled by two reasons: a) for most real-world applications, data is fuzzy, so statistical solutions are both necessary and sufficient; b) the problem is computationally complex, with several corner cases challenging algorithmic solutions.

\par In this paper, we explore such an algorithmic solution, and associated data structures, for the extraction of exact tiling patterns on planar grids \cite{grunbaum1987tilings}. Specifically, we address the problem of identification of tesselations \cite{phillips2014tessellation} of (finite) planar grids, where several different independent tesselations may occur on the same grid, and a tesselation may consist of hierarchical tiling patterns \cite{zhou2012hierarchical}: a more general problem than identifying strictly periodic planar tesselations. This problem arises in applications such as routing matrices \cite{10.1145/1290672.1290688} (where recurring tiles represent common routing patterns that lend themselves to optimization); path-planning \cite{10.1145/1383369.1383372} (where identifying a recurring tile allows the re-use of a previously memoized solution); causality inference in 2D worlds (where tiling patterns are created by or affect other structures; e.g., ARC-AGI \cite{chollet2024arc}); and others. 

 The work in this paper is motivated by the authors' work on the ARC-AGI challenge, where tesselation extraction plays a crucial role in several puzzles, and machine learning has thus far failed to perform satisfactorily. Specifically, this paper offers the following contributions:

\begin{itemize}
  \item We present an algorithm for exact tiling pattern extraction on 2D structures. The described algorithm can deterministically identify building-block tiles that satisfy a given tesselation. 
  \item We describe and evaluate the performance of optimization techniques (discovery of minimal representative forms and normalization) that prune the search space by exploiting the inherent symmetries of tesselations.
  \item We provide an open-source reference implementation\footnote{\url{https://github.com/paulo-chula/Exact-Tesselation-Extraction}} and evaluate its performance across representative examples.
\end{itemize}

The remainder of this paper is organized as follows. Section \ref{sec:background} provides some motivating examples for the need for the techniques described in this paper, as well as a formalization of the problem.
Paper organization: Section \ref{sec:hierarchical} describes the proposed algorithms and associated optimization techniques for search-space pruning. Section \ref{sec:experiments} describes the experimental setup used to evaluate the paper's contributions. Section \ref{sec:related} presents an overview of the state of the art in the field, and Section \ref{sec:conclusions} concludes this paper.

\section{Background}\label{sec:background}

\subsection{Motivating Example}



\begin{figure}[th!]
    \centering
    \begin{subfigure}[t]{0.6\textwidth}
        \includegraphics[width=\textwidth]{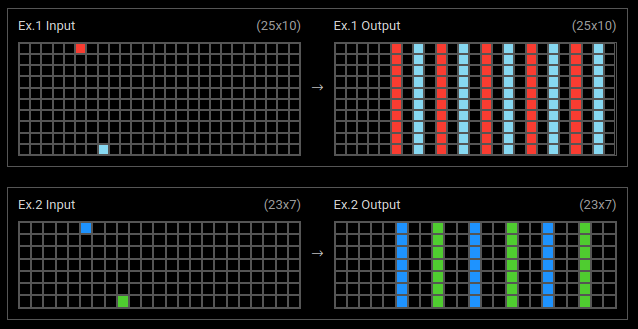}
    \end{subfigure}
    \begin{subfigure}[t]{0.9\textwidth}
        \centering
        \begin{subfigure}[t]{0.3\textwidth}
        \includegraphics[width=\textwidth]{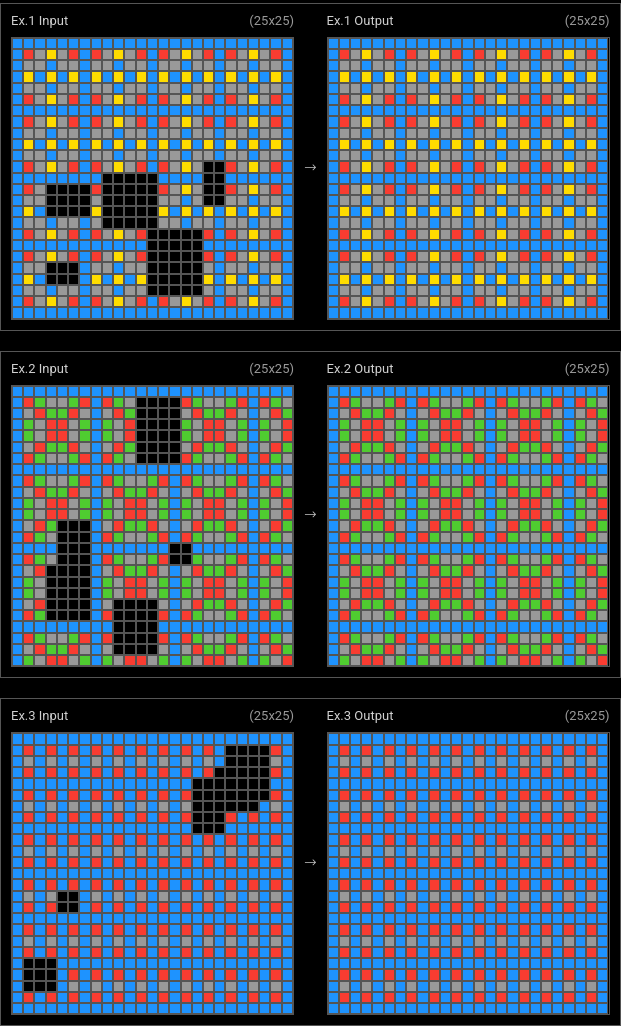}
        \end{subfigure}
        \quad
        \begin{subfigure}[t]{0.2\textwidth}
        \includegraphics[width=\textwidth]{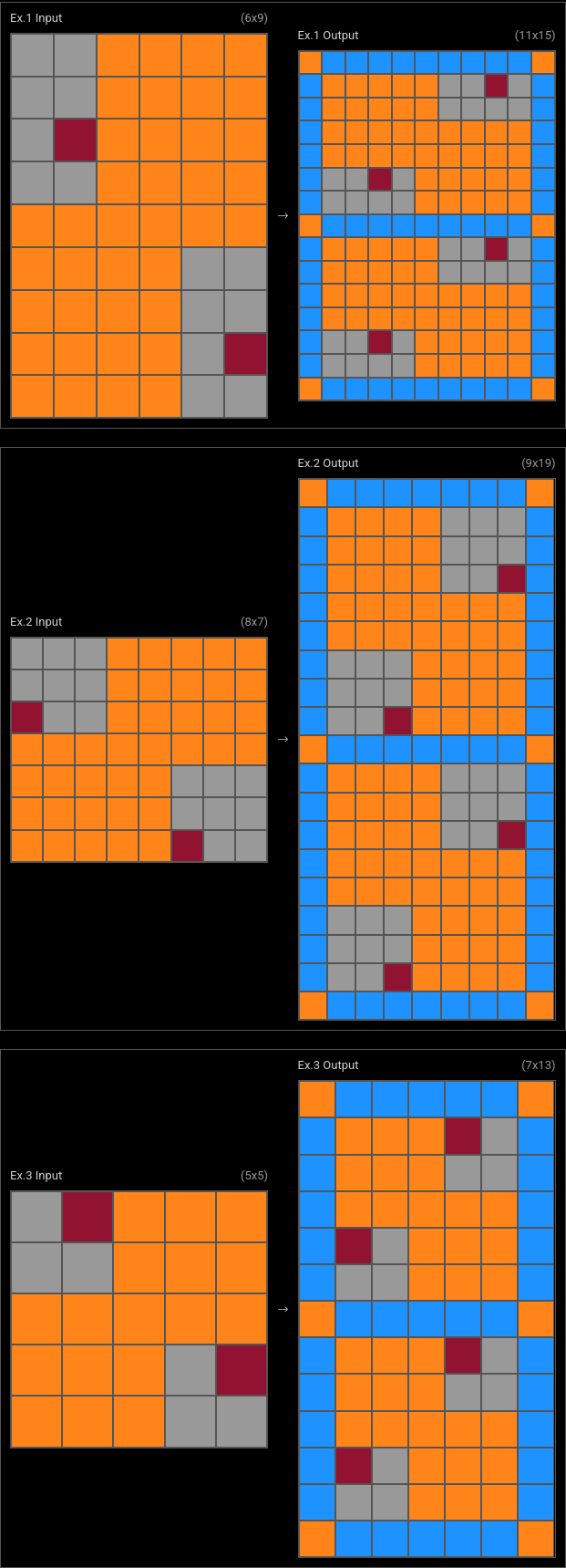}
        \end{subfigure}
        \quad
        \begin{subfigure}[t]{0.3\textwidth}
        \includegraphics[width=\textwidth]{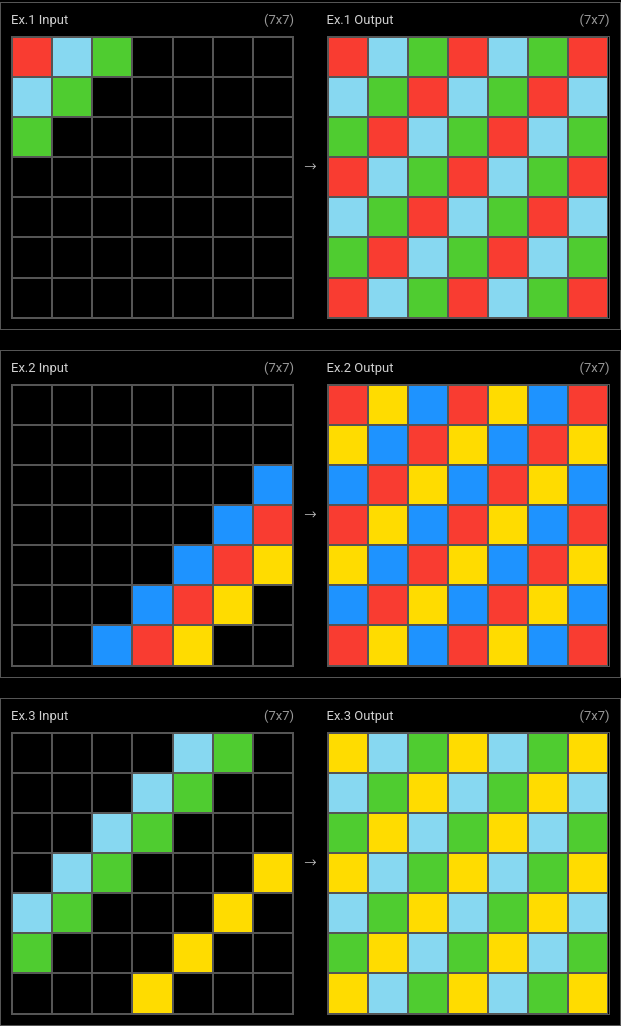}
        \end{subfigure}
    \end{subfigure}
    \caption{ARC tasks; reproduced from \url{https://arcprize.org/play?task=0a938d79}}
    \label{fig:arc}
\end{figure}

The ARC-AGI challenge \cite{chollet2024arc} consists of a set of tests, with training tests publicly available an evaluation tests kept private. In each test, a set of example input-output pairs (typically, 3-5 pairs) depicts a specific process: the goal is to infer that process and generate the output for a given input. Across training and evaluation tests, several different processes exist. Empirical evaluation has shown that human beings can infer all processes \cite{legris2024h}, but machine-learning solutions have not demonstrated great proficiency thus far \cite{pfister2025understanding} (primarily, due to the absence of large training data for the challenge), although Test-Time-Training has recently shown promise \cite{zhut5}.
\par \textit{Program synthesis}, i.e., the inference of a program that can generate outputs from respective inputs from example pairs is a promising solution \cite{ouellette2024towards}, particularly when applied within a Domain Specific Language (DSL \cite{mernik2005and}) that can efficiently express the semantics of the domain \cite{franzenllm}. Consider the example test depicted in at the top of Fig. \ref{fig:arc}, from the public training dataset. A solution can be described in natural language as:

\begin{itemize}
    \item "Extend points into lines, perpendicularly to the "walls" they are adjacent to."
    \item "In the direction of the empty half of the grid, replicate the lines in an alternating fashion, such that the distance between lines is constant."
\end{itemize} 

Implementing such a solution, is a DSL or general-purpose language, is also trivial: but, inferring such a solution computationally in the first place is not trivial, despite the ease with which humans can do so. One such way is to analyze input-output pairs and identify common occurrences. In this case, a solution must correctly identify that: (1) all output grids exhibit a tesselation; (2) there are constraints on the area occupied by that tesselation, and those constraints can be inferred from input grids; (3) there exists a sub-computation (extending points into lines) that can generate a valid tile for such output tesselations from input grids. 
\par We were surprised to find that no algorithm to exactly identify tesselations and corresponding tiles exists, leading to the work in this paper.

\subsection{Preliminaries}

A finite planar grid $\mathbf{G}$ is defined as an $n\times m$ matrix, $n, m \in \mathbb{N}$, where the value in each element $g_{i,j} \in \mathbb{N}, \forall i \in \{1:n\}, \forall j \in \{1:m\}$, can be thought of as isomorphic to the set of colors used to represent the grid visually. As the specific isomorphism is not relevant to the algorithmic aspects, throughout the remainder of this paper we will notate values in cells as natural numbers, and represent grids visually as matrices, without explicitly or consistently mapping numbers to colors, without loss of generality. We will use standard matrix notation, such that element $g_{1,1}$ is the top-left corner of a grid, and element $g_{n,m}$ is the bottom-right corner of a grid.

\begin{definition}
\label{def:box}
A \textit{rectangular bounding box} (notated $R$) in $\mathbf{G}$ encapsulates a rectangular sub-region of $\mathbf{G}$. A rectangular bounding box is a tuple $R = (x,y,l,h)$, where $x \in [1:n],y \in [1:m]$ are the coordinates of the top-left corner of the box, and $l \in [1:n],h \in [1,m], (l > 1) \cup (h > 1), ((x+l) \leq (n-1)) \cap ((y+h) \leq (m-1))$ are the length and height, respectively; where element $g_{i,j}$ is considered in the bounding box if $(x < i < (x+l - 1)) \cap(y < j < (y+h - 1))$.
\end{definition}

\begin{lemma}
\label{lem:num_boxes}
If $\mathbf{G}$ is of size $n\times m$, there are $\frac{nm((n+1)(m+1)-4)}{4}$ possible rectangular bounding boxes in $\mathbf{G}$.
\end{lemma}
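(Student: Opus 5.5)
The plan is to count all axis-aligned rectangular sub-regions of $\mathbf{G}$ and then subtract the degenerate ones excluded by Definition~\ref{def:box}. I will read Definition~\ref{def:box} according to its evident intent: a box is any contiguous rectangular block of cells that lies entirely inside $\mathbf{G}$ and is not a single cell, which is what the condition $(l>1)\cup(h>1)$ expresses. Taken literally, the strict inequalities and the ``$-1$'' offsets in the definition are off by one. I will therefore state this normalisation explicitly at the start, namely that a box is determined by a row interval $[x, x+l-1]\subseteq[1,n]$ and a column interval $[y, y+h-1]\subseteq[1,m]$. Everything else then reduces to counting these intervals.

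The first step counts intervals. A contiguous interval of rows is determined by its first and last row, i.e.\ by a choice of $a\le b$ in $\{1,\dots,n\}$. Equivalently, it is determined by a choice of two distinct cut lines among the $n+1$ horizontal grid lines. Either way there are $\binom{n+1}{2}=\frac{n(n+1)}{2}$ row intervals, and likewise $\frac{m(m+1)}{2}$ column intervals. Since a rectangle is exactly the product of a row interval and a column interval, and distinct pairs give distinct cell sets, the total number of rectangles of any size is
\[
\frac{n(n+1)}{2}\cdot\frac{m(m+1)}{2}=\frac{nm(n+1)(m+1)}{4}.
\]

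The second step removes the rectangles with $l=h=1$, which are excluded by the requirement that $l>1$ or $h>1$. These are exactly the single cells, so there are $nm$ of them. The count is then
\[
\frac{nm(n+1)(m+1)}{4}-nm=\frac{nm\bigl((n+1)(m+1)-4\bigr)}{4},
\]
which is the claimed expression.

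The main obstacle is not the arithmetic but reconciling the literal inequalities of Definition~\ref{def:box} with the stated formula. Under a literal reading, the constraints $x+l\le n-1$ and the strict membership inequalities would shrink the admissible ranges, and the formula would no longer match. I would handle this by fixing the convention in one sentence: a box with corner $(x,y)$ and dimensions $(l,h)$ covers rows $x,\dots,x+l-1$ and columns $y,\dots,y+h-1$, subject to $x+l-1\le n$ and $y+h-1\le m$. As a sanity check, I would verify the formula on a small case. For a $2\times2$ grid the formula gives $\frac{4(9-4)}{4}=5$: four dominoes plus the whole grid, which is correct.
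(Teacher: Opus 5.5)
Your proof is correct and follows the paper's argument: multiply the $\frac{n(n+1)}{2}$ row intervals by the $\frac{m(m+1)}{2}$ column intervals, then subtract the $nm$ single-cell boxes. Your explicit fix for the off-by-one inequalities in Definition~\ref{def:box} is a worthwhile addition, since the paper's proof adopts that reading without stating it.
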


\begin{proof} 
As a rectangle is defined by its projections on the $x,y$ axes, and a line of integer length $l$ admits $\frac{l(l+1)}{2}$ integer sub-projections on a parallel axis, the combinations of projections are given by $\frac{n(n+1)}{2} \times \frac{m(m+1)}{2} = \frac{n(n+1)m(m+1)}{4}$ This includes all $n\times m$ rectangles (squares) of size $1\times1$, which must be subtracted from the total number: thus, $\frac{n(n+1)m(m+1)}{4} - n\times m = \frac{nm((n+1)(m+1)-4)}{4}$.
\end{proof}

\begin{definition}
\label{def:tile}
A rectangular bounding box $R_u = (x_u,y_u,l,h)$ is a \textit{rectangular tile} in $\mathbf{G}$ if $\exists R_v = (x_v,y_v,l,h) \rightarrow ((x_v = x_u + l, y_v = y_u) \cup (x_v = x_u - l, y_v = y_u) \cup (x_v = x_u , y_v = y_u + h) \cup (x_v = x_u, y_v = y_u - h))  \cap (\forall (a \in [0,l],b \in [0,h]), g_{x_u + a,y_u+b} = g_{x_v+a,y_v+b}) $.
\end{definition}

\begin{corollary}
The definition of rectangular tile can be trivially expanded to include rotations (for square tiles) and symmetries, if needed.
\end{corollary}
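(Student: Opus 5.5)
The plan is to treat the corollary constructively: exhibit the generalized definition explicitly and check that it behaves like Definition~\ref{def:tile}. Definition~\ref{def:tile} asks for an adjacent box $R_v$ of the same dimensions with cellwise equality
\[
g_{x_u+a,\,y_u+b} = g_{x_v+a,\,y_v+b}.
\]
I will replace this equality by
\[
g_{x_u+a,\,y_u+b} = g_{(x_v,y_v)+\tau(a,b)},
\]
where $\tau$ ranges over a fixed group $T$ of transformations of the index rectangle $[0,l]\times[0,h]$.

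The first step is to fix $T$. For arbitrary $l,h$, take the Klein four-group $\{\mathrm{id}, \rho_x, \rho_y, \rho_x\rho_y\}$. Here $\rho_x(a,b)=(l-a,b)$ and $\rho_y(a,b)=(a,h-b)$ are the two mirror symmetries, and $\rho_x\rho_y$ is the rotation by $\pi$. When $l=h$, enlarge $T$ to the dihedral group of order eight. This adds the transposition $(a,b)\mapsto(b,a)$ and the quarter rotations $(a,b)\mapsto(b,l-a)$ and $(a,b)\mapsto(l-b,a)$.

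The second step is to verify that each $\tau\in T$ is a bijection of the index set onto itself. This is exactly where squareness is needed: a quarter rotation maps $[0,l]\times[0,h]$ onto $[0,h]\times[0,l]$, which is the same domain only when $l=h$. Because of this, the displaced copy $R_v$ has the same extent as $R_u$. The four adjacency clauses and the bounds of Definition~\ref{def:box} then apply unchanged, so the generalized tile is still a rectangular bounding box in $\mathbf{G}$. Choosing $\tau=\mathrm{id}$ recovers Definition~\ref{def:tile}, so the extension is conservative.

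The third step is the only one that is not immediate: showing that being a tile, in the extended sense, is invariant under the group action. This is what makes the expansion meaningful rather than merely syntactic. I would show it using closure of $T$ under composition and inverses. If $R_u$ matches $R_v$ via $\tau$, then $R_v$ matches $R_u$ via $\tau^{-1}$. If we apply $\sigma\in T$ to the tile's contents, the new pair matches via $\sigma\tau\sigma^{-1}$, which again lies in $T$.

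Finally, I note the cost of the extension. The number of candidate boxes is unaffected by Lemma~\ref{lem:num_boxes}. The check per candidate grows by at most the constant factor $|T|\le 8$, so the expansion is trivial in the complexity sense as well.
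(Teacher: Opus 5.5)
The paper gives no proof of this statement. It is asserted as self-evident, and the sentence after it only restates Definition~\ref{def:tile} in words. So there is no argument of the paper's to compare routes with. Your proposal supplies what the paper omits, and it is correct.

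Your argument has four pieces:
\begin{itemize}
\item You replace the cellwise equality by $g_{x_u+a,\,y_u+b}=g_{(x_v,y_v)+\tau(a,b)}$, with $\tau$ ranging over a group $T$ acting on the index rectangle.
\item You observe that quarter turns preserve that rectangle exactly when $l=h$, which is the content of the paper's parenthetical ``for square tiles''.
\item You note that $\mathrm{id}\in T$ makes the extension conservative.
\item You bound the extra work by a factor $|T|\le 8$ over the unchanged candidate set of Lemma~\ref{lem:num_boxes}.
\end{itemize}
Together these give the word ``trivially'' a checkable meaning; the paper's bare assertion buys brevity and nothing else. Your third step (symmetry of the matching relation and conjugation invariance) is correct but not needed for the claim as stated.

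Two small repairs are needed.

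First, your list of what the dihedral group adds to the Klein four-group has only three maps. It is missing the anti-diagonal reflection $(a,b)\mapsto(l-b,\,l-a)$, which is, for example, the composite of the transposition with $\rho_x\rho_y$. Your third step relies on closure under composition, inverses and conjugation. So $T$ must be the full group of order eight, not the seven maps you wrote down.

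Second, your formulas $l-a$, $h-b$, $(b,l-a)$, $(l-b,a)$ fit the range $a\in[0,l]$, $b\in[0,h]$ as literally written in Definition~\ref{def:tile}. But a box of length $l$ and height $h$ has offsets in $[0,l-1]\times[0,h-1]$. Under that reading, every occurrence of $l$ and $h$ in your maps should become $l-1$ and $h-1$.

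If you want to go beyond the statement, the one place where the extension is not purely local is Definition~\ref{def:tesselation}. It speaks only of translations of $t$, and would have to be amended to allow $\tau$-transformed copies.
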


I.e., we define tiles as rectangular bounding boxes in $\mathbf{G}$ which, when translated horizontally by $l$ or when translated vertically by  $h$, perfectly matches the values originally at the new position.

\begin{definition}
\label{def:tesselation}
A \textit{tesselation} $T$ of rectangular tiles $t$ is a continuous subset of $\mathbf{G}$ that can be drawn by starting with a tile $t$ and adding connected translations of $t$, where all horizontal translations are of distance $l$ and all vertical translations are of distance $h$. Furthermore, tesselations may be part of tiles (Fig. \ref{fig:tesselations}).
\end{definition}

\begin{figure}[ht]
\begin{center}
\centerline{\includegraphics[width=0.8\columnwidth]{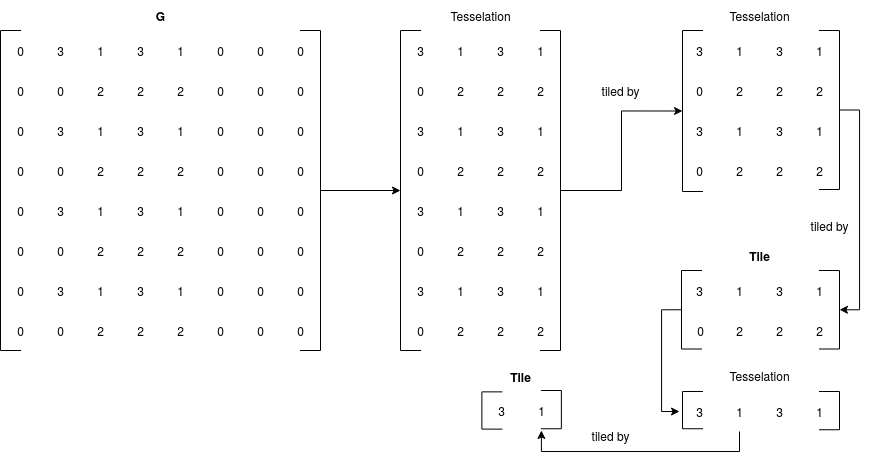}}
\caption{Example grid and identified tesselations, with connections to respective tiles.}
\label{fig:tesselations}
\end{center}
\end{figure}

Note that, in mathematics, it is more common to define (periodic) tiles as geometric shapes (typically represented visually as of a single color) that can be arranged such that they tile the plane. In contrast, we care about tiles as rectangular bounding boxes (which represent the spatial period of the geometric tiles, and thus are guaranteed to emerge from any sufficiently large tesselation), as these suffice to identify a large subset of all possible tesselations, and minimize the search space of all possible tiles.


\section{Hierarchical Composite and Prime Discovery}\label{sec:hierarchical}

\subsection{Definitions}

We define the following key concepts used throughout our approach:

\textbf{Composite}: A section within the area of matrix that exhibits internal overlap, meaning a 
row-wise/column-wise split yeilds on identical halves. For example, 
a $4 \times 6$ tile \texttt{1212/3434/1212/3434} is a composite because splitting it row-wise yields 
two identical $2 \times 6$ halves.

\textbf{Normalized Composite}: The composite after reduced into its minimal representative form obtain 
through iterative halving are normalized composites. A composite \texttt{12121212} ($1 \times 8$) normalizes 
to \texttt{1212} ($1 \times 4$) and then \texttt{1212} ($1 \times 2$) after successive column reductions.

\textbf{Prime}: Primes are the basic tiles with which the whole surface can be built. Its the atomic 
tile that cannot be further decomposed. Primes are the building blocks used to reconstruct the original 
composite through tiling.

\textbf{Overlap}: A process of comparing tiles by placing one on top of another which is performed after the 
tiles are split (row wise or column wise) into two halves. Overlap detection is performed with or without 
duplication depending on the phase.

\textbf{Pruning}: The operation of removing one row or column from an edge of a rectangular 
tile (top, bottom, left, or right), generating a smaller child tile. Pruning enables exhaustive 
exploration of all sub-rectangles via breadth-first search.

\textbf{Branch Cut}: A termination of exploration in the BFS tree when overlap is detected in a certain node, 
This prevent exploration of redundant sub-trees, since all descendants are guaranteed to be contained 
within the discovered composite or prime.

\textbf{Duplication}: The process of duplicating the middle row (for number of rows) or middle column 
(for number of column) before splitting, enabling overlap detection in odd-dimensional structures. 
This process is selectively applied in inspection and normalization.

\textbf{Hierarchical Filtering}: (memoization) The process of skipping composites from further pruning during the prime
discovery phase as those smaller composites are explored and tested during the prime pruning,
enabled by area-based sorting in descending order.

\textbf{BFS Level}: The depth at which a tile is explored during breadth-first pruning. Level 0 is the 
initila matrix, Level 1 contains tiles from direct pruning, Level 2 contains tiles from pruning Level 1 
tiles, and so on.

\textbf{Cumulative Strategy}: A decomposition approach that incrementally combines primes from multiple 
BFS levels (Levels 1, then 1--2, then 1--2--3, etc.) to find the solution with the minimum number of tile 
placements across all granularities.

\textbf{Per-Level Strategy}: A decomposition approach that solves the tiling for each 
BFS level (Level 1 only, Level 2 only, etc.), revealing how solution quality varies by tile granularity 
and enabling trade-off analysis between tile size and placement count.

\subsection{High-Level Methodology}

Given a planar grid $G$ of size $n \times m$, our approach identifies tessellations through a three-phase 
hierarchical process: (1) \textbf{Composite Discovery}, which identifies all regions on the grid 
exhibiting internal overlap, (2) \textbf{Normalization}, which reduces composites to their minimal 
representative form using the overlapping technique, and (3) \textbf{Prime Extraction}, which decomposes 
normalized composites into smaller building blocks using two independent strategies.

The key insight driving our methodology is that larger composites often contain smaller composites as they 
are decomposed. To avoid redundancy, we employ hierarchical filtering/memoization: composites are 
processed in descending order by size, and any composite discovered during pruning of a larger composite 
is excluded from further processing, significantly reducing computational overhead while maintaining 
completeness.

Our independent strategies for prime extraction provides two complementary perspectives on decomposition 
granularity. Firstly, the strategy combines prime candidates across pruning levels/depth to find 
the overall solution with minimum tile placements. Secondly, the per-level strategy evaluates each pruning 
level/depth independently, offering alternative decompositions at different scales.

On our approach in handling of odd-dimensional grids, We employ a selective 
duplication technique: duplication is applied only in the inspection and normalization phases, where a single 
overlap check must definitively identify or reduce a pattern. During these phases, we duplicate the middle 
row (for odd number of rows) or middle column (for odd number of columns) before splitting. Critically, 
we do not apply duplication during BFS pruning phases, as step wise pruning of all sides ensures that patterns 
obscured by odd dimensions at one node are exposed at subsequent nodes with even dimensions.

On top of that in our prime extraction phase the algorithm detects the nested tessellations. 
When BFS pruning discovers a child with overlap during prime extraction, we normalize the 
overlapping part and record as a reusable pattern with maximum use information, 
ensuring primes are always stored in canonical form and nested structures are properly identified.

\begin{algorithm}
\caption{Composite Discovery}
\begin{algorithmic}[1]
\Require Grid $G$ of size $n \times m$
\Ensure Set $\mathcal{C}$ of composites
\Procedure{DiscoverComposites}{$G$}
    \State \Comment{Inspection on untrimmed grid (with duplication)}
    \State $(top, bot) \gets$ SplitRows($G$, duplicateIfOdd=true)
    \If{$top = bot$ and $top$ not all blanks}
        \State \Return $\{top\}$
    \EndIf
    \State $(left, right) \gets$ SplitCols($G$, duplicateIfOdd=true)
    \If{$left = right$ and $left$ not all blanks}
        \State \Return $\{left\}$
    \EndIf
    \State \Comment{Inspection on trimmed grid (with duplication)}
    \State $G' \gets$ TrimOuterBlanks($G$)
    \State $(top, bot) \gets$ SplitRows($G'$, duplicateIfOdd=true)
    \If{$top = bot$ and $top$ not all blanks}
        \State \Return $\{top\}$
    \EndIf
    \State $(left, right) \gets$ SplitCols($G'$, duplicateIfOdd=true)
    \If{$left = right$ and $left$ not all blanks}
        \State \Return $\{left\}$
    \EndIf
    \State \Comment{BFS pruning (without duplication)}
    \State $Q \gets \{G'\}$, $seen \gets \{G'\}$, $\mathcal{C} \gets \emptyset$
    \While{$Q \neq \emptyset$}
        \State $node \gets$ Dequeue($Q$)
        \State $\tilde{T} \gets$ PruneTop($node$), $\tilde{B} \gets$ PruneBottom($node$)
        \State $\tilde{L} \gets$ PruneLeft($node$), $\tilde{R} \gets$ PruneRight($node$)
        \For{$child$ in $\{\tilde{T}, \tilde{B}, \tilde{L}, \tilde{R}\}$}
            \If{$child$ is empty or does not meet minimum size}
                \State \textbf{continue}
            \EndIf
            \If{$child \in seen$}
                \State \textbf{continue}
            \EndIf
            \State $(top, bot) \gets$ SplitRows($child$, duplicateIfOdd=false)
            \If{$(top, bot) \neq$ null and $top = bot$ and $top$ not all blanks}
                \State $\mathcal{C} \gets \mathcal{C} \cup \{top\}$
                \State \textbf{continue}
            \EndIf
            \State $(left, right) \gets$ SplitCols($child$, duplicateIfOdd=false)
            \If{$(left, right) \neq$ null and $left = right$ and $left$ not all blanks}
                \State $\mathcal{C} \gets \mathcal{C} \cup \{left\}$
                \State \textbf{continue}
            \EndIf
            \State Enqueue($Q$, $child$), $seen \gets seen \cup \{child\}$
        \EndFor
    \EndWhile
    \State \Return RemoveDuplicates($\mathcal{C}$)
\EndProcedure
\end{algorithmic}
\end{algorithm}

\begin{algorithm}
\caption{Normalization}
\begin{algorithmic}[1]
\Require Composite tile $T_c$
\Ensure Normalized composite $T_c'$ (minimal form)
\Procedure{Normalize}{$T_c$}
    \State $reductions \gets 0$
    \Repeat
        \State $reduced \gets$ false
        \State $(top, bot) \gets$ SplitRows($T_c$, duplicateIfOdd=true)
        \If{$top = bot$ and MeetsMinSize($top$)}
            \State $T_c \gets top$
            \State $reductions \gets reductions + 1$
            \State $reduced \gets$ true
            \State \textbf{continue}
        \EndIf
        \State $(left, right) \gets$ SplitCols($T_c$, duplicateIfOdd=true)
        \If{$left = right$ and MeetsMinSize($left$)}
            \State $T_c \gets left$
            \State $reductions \gets reductions + 1$
            \State $reduced \gets$ true
        \EndIf
    \Until{$\neg reduced$}
    \State \Return $T_c$
\EndProcedure
\end{algorithmic}
\end{algorithm}

\begin{algorithm}
\caption{Prime Extraction via BFS Pruning}
\begin{algorithmic}[1]
\Require Normalized composite $T_c$
\Ensure Set of primes $\mathcal{P}$, Tessellations $tess$
\Procedure{ExtractPrimes}{$T_c$}
    \State $tess \gets$ FindTessellations($T_c$)
    \State $\mathcal{P} \gets \emptyset$
    \State \Comment{Apply BFS pruning from Algorithm 1 starting with $T_c$}
    \State \Comment{Difference: when overlap found, normalize and check tessellation}
    \For{each $child$ discovered during BFS pruning}
        \If{$child$ does not meet minimum size (2x1 or 1x2)}
            \State \textbf{continue}
        \EndIf
        \If{$child$ has overlap (row or column)}
            \State $prime \gets$ Normalize(overlapping part) \Comment{Algorithm 2}
            \If{TilesMatrix($prime$, $child$)}
                \State $tess \gets tess \cup \{(prime, max\_uses)\}$
            \EndIf
            \State $\mathcal{P} \gets \mathcal{P} \cup \{prime\}$
            \State Apply branch cut \Comment{Don't prune further}
        \Else
            \State $\mathcal{P} \gets \mathcal{P} \cup \{child\}$
            \State Continue pruning
        \EndIf
    \EndFor
    \State \Return $\mathcal{P}$, $tess$
\EndProcedure
\end{algorithmic}
\end{algorithm}

\begin{algorithm}
\caption{Hierarchical Prime Extraction and Dual-Strategy MDL}
\begin{algorithmic}[1]
\Require Set of normalized composites $\{T_{c_1}, \ldots, T_{c_k}\}$
\Ensure Optimal solutions $\mathcal{S}$ for each composite
\Procedure{HierarchicalExtraction}{$\{T_{c_1}, \ldots, T_{c_k}\}$}
    \State Sort composites by area (descending)
    \State $handled \gets \emptyset$, $\mathcal{S} \gets \emptyset$
    \For{each composite $T_{c_i}$ in sorted order}
        \If{$T_{c_i} \in handled$}
            \State \textbf{continue} \Comment{Already discovered as prime}
        \EndIf
        \State $(\mathcal{P}_i, tess_i) \gets$ ExtractPrimes($T_{c_i}$) \Comment{Algorithm 3}
        \State $handled \gets handled \cup \mathcal{P}_i$
        \State \Comment{Strategy A: Cumulative (levels 1, then 1-2, then 1-2-3, ...)}
        \State Combine primes from levels incrementally
        \State $sols_{cum} \gets$ solve puzzle for each combination, keep best
        \State \Comment{Strategy B: Per-Level (level 1 only, level 2 only, level 3 only, ...)}
        \State Solve puzzle independently for each level
        \State $sols_{level} \gets$ solve puzzle for each level separately
        \State $\mathcal{S} \gets \mathcal{S} \cup \{(T_{c_i}, sols_{cum}, sols_{level})\}$
    \EndFor
    \State \Return $\mathcal{S}$
\EndProcedure
\end{algorithmic}
\end{algorithm}

\subsection{Algorithm Description}

\textbf{Algorithm 1} discovers composites through a three-stage process. 
The first stage, \textbf{inspection on the untrimmed grid/original matrix}, checks if the entire input exhibits row or 
column overlap by splitting it in half and comparing. For input with odd dimensions (e.g., 5 rows), 
the middle row is duplicated before splitting, enabling overlap. If inspection fails, the second 
stage \textbf{trims blank borders on the input} and re-inspects the processed grid, again using duplication. 
when both inspections fail the algorithm runs the third stage: 
\textbf{BFS pruning on the trimmed grid}, which systematically explores all sub-rectangles by removing 
edges in four directions (top, bottom, left, right).

\par Figure~\ref{fig:composite_pruning} shows BFS pruning for composite discovery. Starting from 
root Mk (\texttt{10103/02000/10100/02003}), the tree expands via pruning (top, bottom, left, right) at 
each level. Green nodes continue pruning, red nodes indicate composites found with branch cuts applied, 
orange nodes are duplicates skipped by the seen set, and blue nodes are almost the last level of pruning. 
Three composites are discovered at the bottom.

\textbf{Algorithm 2} normalizes composites to smaller form through iterative halving with duplication. 
The algorithm repeatedly attempts to split the composite row-wise and column-wise, checking if the two 
halves are identical. If they match and the resulting half meets minimum size i.e. (2x1/1x2), the composite 
is replaced with the atomic counterpart. In case of odd dimensions, the middle row or column is duplicated 
followed by the same process as above. Duplication is essential because without it, odd-dimensional 
composites cannot be evenly split and would remain un-normalized.

\par Figure~\ref{fig:normalization} illustrates iterative normalization. Three composites (Tc, pink) with 
row overlap are reduced through successive halving with duplication to their minimal forms (Tc', purple). 
For example, \texttt{1010--/0200--/1010--/0200--} reduces to \texttt{1010--/0200--} after one row halving 
iteration.

\textbf{Algorithm 3} extracts primes using BFS pruning on composite similar to Algorithm 1, with a critical 
difference: 
when overlap is detected in a child node, the overlapping part is normalized (via Algorithm 2). If the 
normalized tile repeats to perfectly fill the child node, it is recorded with 
its repetition count (max uses). For example, if child \texttt{1212} exhibits column overlap, we normalize 
the overlapping part to \texttt{12} and verify it tiles \texttt{1212} exactly 2 times. This tessellation 
information tells the puzzle solver how many times the tile can be reused. A branch cut is then applied to 
prevent exploring descendants, since they would be fragments of the identified pattern.

\par Figure~\ref{fig:prime_pruning} shows prime extraction via BFS starting from normalized composite Tc' 
(\texttt{1010--/0200--}). Purple nodes are primes, orange nodes are duplicates, dark red nodes are empty 
(discarded). When overlap is detected, tiles are normalized and checked for tessellation. The bottom panel 
organizes primes by discovery level (Levels 1--4) for dual-strategy MDL assembly.

\textbf{Algorithm 4} orchestrates hierarchical extraction/memoization with filtering and dual-strategy 
puzzle solving. 
Composites are sorted by descending area to enable hierarchical filtering: if a composite was already 
discovered as a prime of a larger composite (tracked in the `handled` set), it is skipped entirely. For 
each processed composite, primes are extracted via Algorithm 3. Two strategies then solve the tiling puzzle: 
the \textbf{cumulative strategy} incrementally combines primes from Levels 1, then 1--2, then 1--2--3, etc., 
keeping solutions with fewest tile placements—the global optimum. The \textbf{per-level strategy} solves 
independently for each level, revealing granularity trade-offs: Level 1 uses fewer large custom tiles, 
while deeper levels use more small standard tiles. Both strategies employ backtracking with early termination 
and de-duplication to ensure only unique, optimal solutions are retained.

\par Figure~\ref{fig:prime_levels} shows the level-wise organization of primes discovered during BFS 
extraction. Primes are grouped by their discovery depth: Level 1 contains 4 tiles from direct pruning of 
the composite, Level 2 contains 7 tiles from pruning Level 1 tiles, Level 3 contains 1 tile, and Level 4 
contains 7 tiles from deeper exploration. This organization is essential for dual-strategy MDL assembly, 
where the cumulative strategy combines levels incrementally (1, 1-2, 1-2-3, 1-2-3-4) and the per-level 
strategy evaluates each level independently.

\begin{figure}[htbp]
    \centering
    \includegraphics[width=\textwidth]{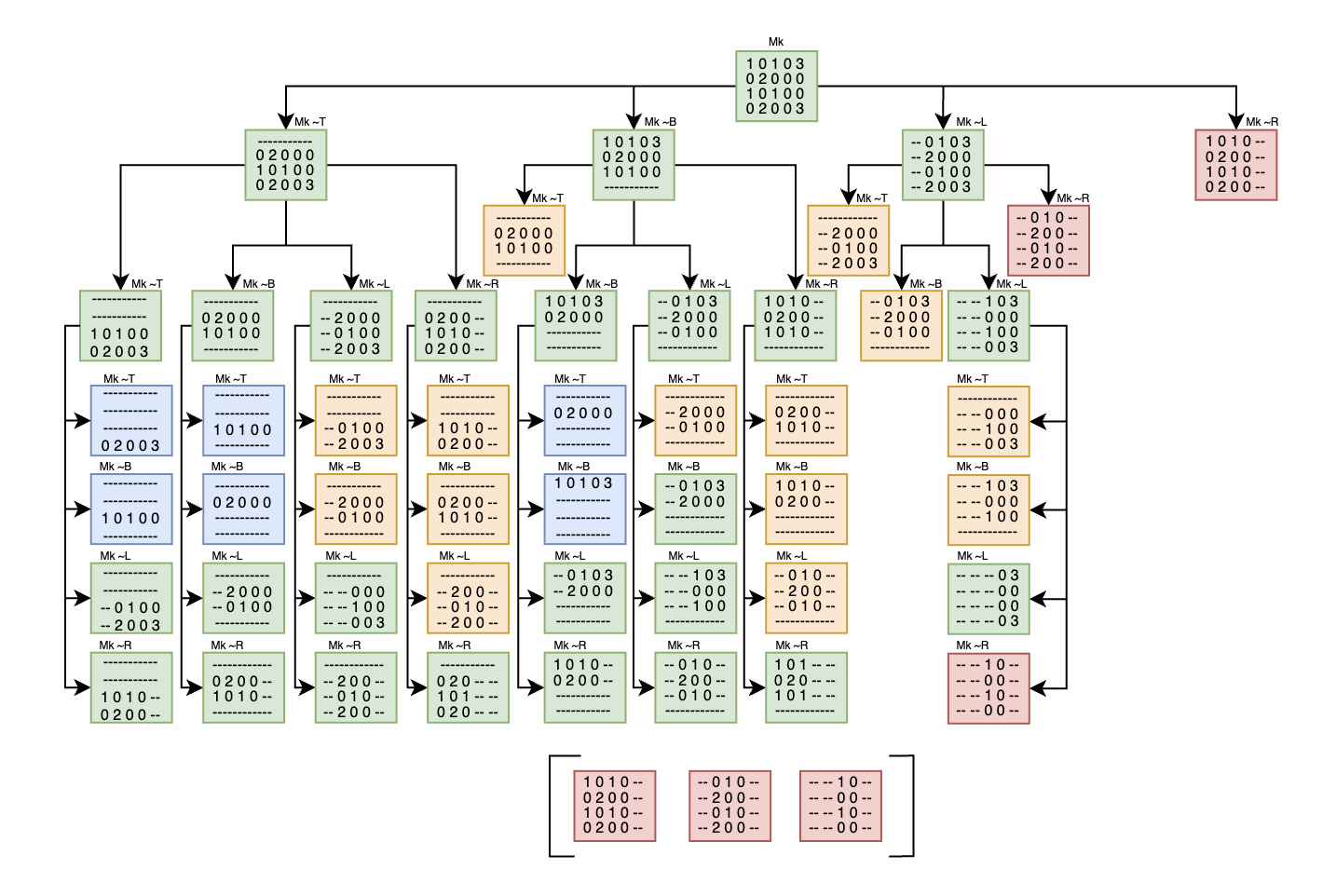}
    \caption{BFS pruning tree for composite discovery showing node status by color: green (continue), pink/red (tessellation found), orange (duplicate), light blue (near minimum), dark red (empty), purple (prime). Three composites discovered at bottom.}
    \label{fig:composite_pruning}
\end{figure}

\begin{figure}[htbp]
    \centering
    \includegraphics[width=0.6\textwidth]{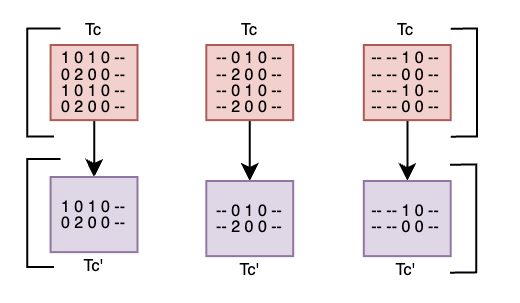}
    \caption{Normalization process reducing three composites (Tc, pink) to minimal forms (Tc', purple) through iterative row/column halving with duplication.}
    \label{fig:normalization}
\end{figure}

\begin{figure}[htbp]
    \centering
    \includegraphics[width=\textwidth]{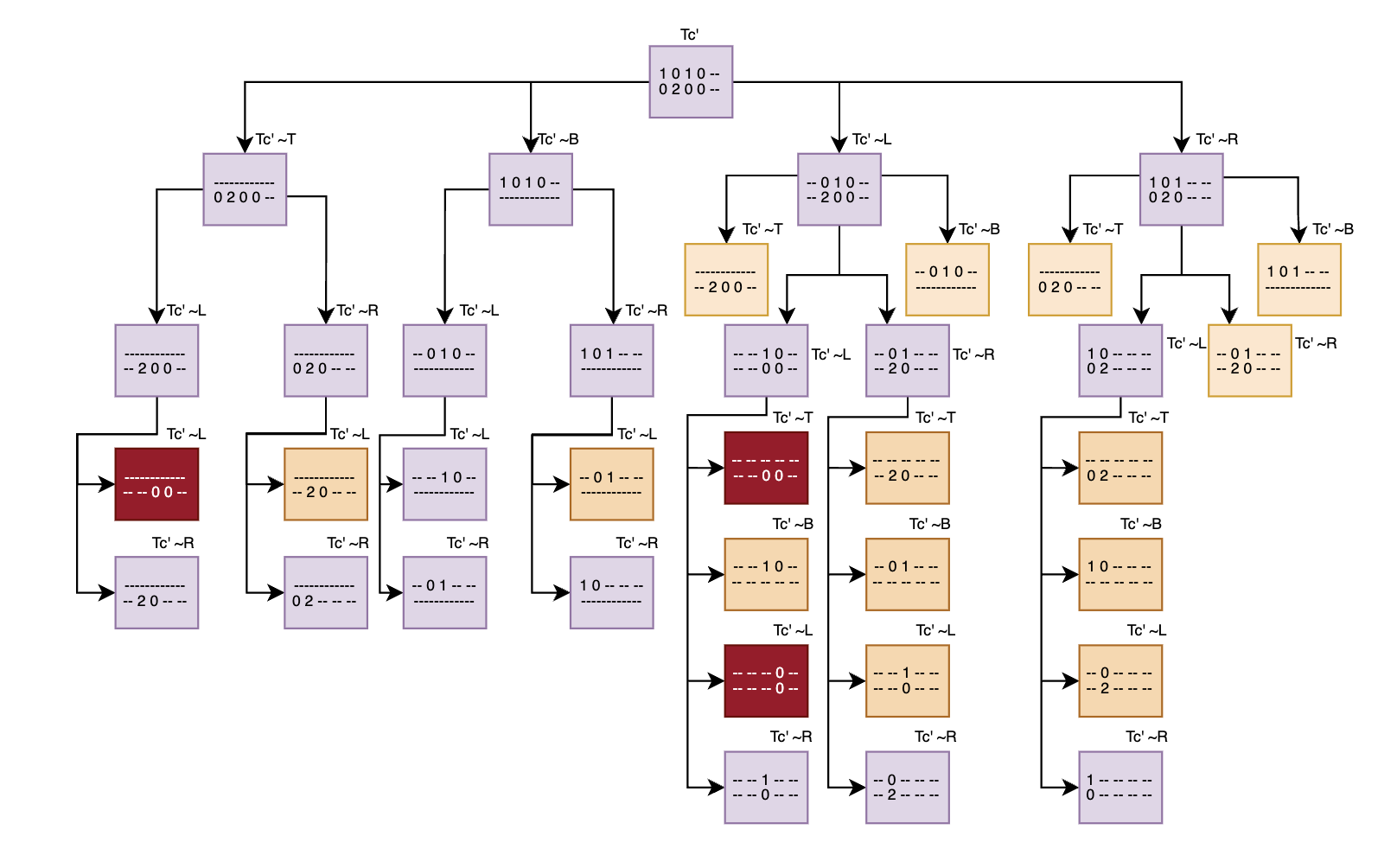}
    \caption{Prime extraction BFS tree with overlap detection and normalization. Colors indicate: purple (primes), orange (duplicates), dark red (empty). Bottom panel shows level-wise organization (Levels 1-4).}
    \label{fig:prime_pruning}
\end{figure}

\begin{figure}[htbp]
    \centering
    \includegraphics[width=0.8\textwidth]{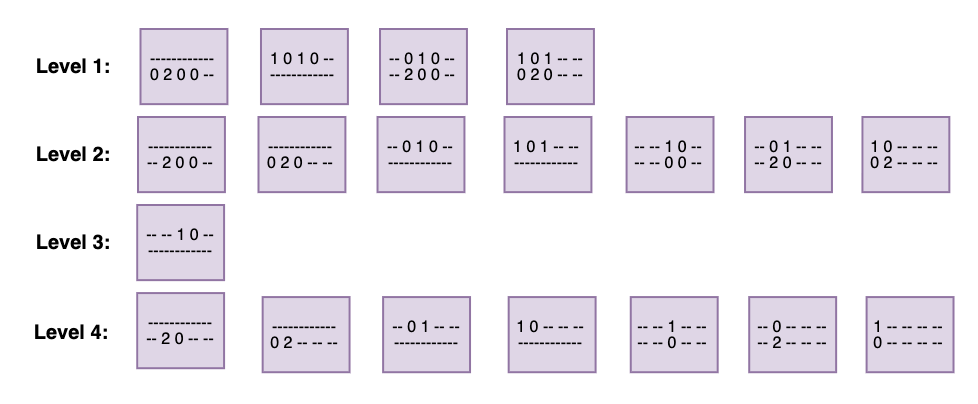}
    \caption{Primes organized by BFS discovery level for dual-strategy MDL assembly.}
    \label{fig:prime_levels}
\end{figure}

\section{Experimental Results}\label{sec:experiments}

\subsection{Test Cases}

We evaluate our algorithm on three test cases designed to exhibit different tessellation properties and 
complexity levels. (a) \textbf{Band-in-blanks} ($6 \times 6$): 
$\begin{bmatrix} - & - & - & - & - & - \\ - & 1 & 1 & 1 & 1 & - \\ - & 2 & 2 & 2 & 2 & - \\ - & 1 & 1 & 1 & 1 & - \\ - & 2 & 2 & 2 & 2 & - \\ - & - & - & - & - & - \end{bmatrix}$, 
a repeating $2 \times 2$ pattern surrounded by blank borders, testing trimmed inspection. 
(b) \textbf{Mixed-noise} ($6 \times 8$): $\begin{bmatrix} - & - & - & 5 & 5 & 5 & - & - \\ - & 1 & 2 & - & - & - & - & - \\ - & 3 & - & 4 & - & 4 & - & - \\ - & 1 & 2 & - & - & - & - & - \\ - & 3 & - & 4 & - & - & - & - \\ - & - & - & 5 & 5 & 5 & - & - \end{bmatrix}$, 
featuring irregular patterns and scattered noise, testing BFS pruning robustness. (c) \textbf{Simple-pattern} ($4 \times 4$): $\begin{bmatrix} 1 & 0 & 1 & 0 \\ 0 & 2 & 0 & 2 \\ 1 & 0 & 1 & 0 \\ 0 & 2 & 0 & 2 \end{bmatrix}$, 
a checkerboard with perfect symmetry, testing regular tessellation detection.

Table~\ref{tab:timing} presents timing breakdown for three test cases. Composite discovery (1.46--3.77ms) benefits from early termination via inspection or aggressive branch cutting. Normalization (0.03--0.42ms) is nearly instantaneous. Prime extraction (11.49--167.20ms) dominates runtime due to dual-strategy puzzle solving. Hierarchical filtering proves highly effective: in Mixed-noise, 13 of 16 composites (81.25\%) are skipped, yielding a $5.3\times$ reduction in the most expensive phase.

\begin{table}[htbp]
\centering
\caption{Timing Analysis (milliseconds)}
\label{tab:timing}
\small
\begin{tabular}{lcccccc}
\toprule
\textbf{Test Case} & \textbf{Grid} & \textbf{Composites} & \textbf{Discovery} & \textbf{Normalization} & \textbf{Prime + Puzzle} & \textbf{Total} \\
 & \textbf{Size} & \textbf{Found / Processed} & & & \textbf{Solving} & \\
\midrule
Band-in-blanks  & $6 \times 6$ & 1 / 1  & 0.17 & 0.04 & 0.16 & 0.64 \\
Mixed-noise     & $6 \times 8$ & 14 / 4 & 3.91 & 0.17 & 2.90 & 8.43 \\
Simple-pattern  & $4 \times 4$ & 1 / 1  & 0.06 & 0.03 & 0.99 & 1.48 \\
\bottomrule
\end{tabular}
\end{table}

\begin{figure}[htbp]
    \centering
    \includegraphics[width=\textwidth]{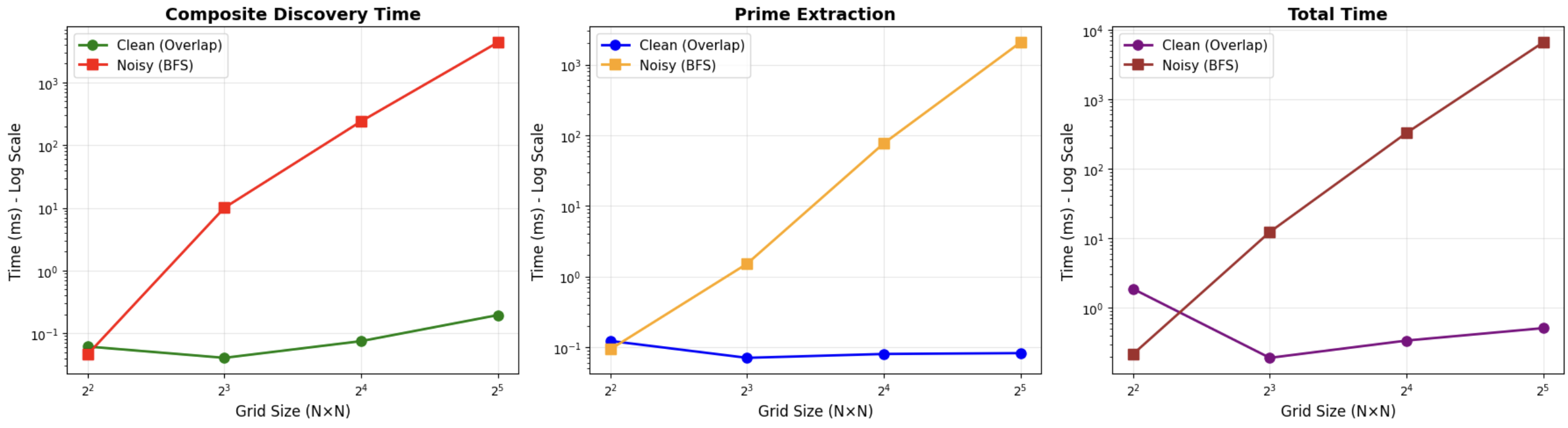}
    \caption{Time complexity for composite discovery, prime extraction and total time taken.}
    \label{fig:composite_pruning}
\end{figure}

\subsection{Decomposition Results}

We present the complete decomposition results for all three test cases in three tables. 
Table~\ref{tab:stats} provides a statistical overview showing how many composites were discovered, 
how many required processing versus being skipped through hierarchical filtering, the total number 
of primes extracted, and the count of optimal solutions found. Table~\ref{tab:composites_primes} details 
each normalized composite and lists all primes extracted from it, organized by the BFS level at which they 
were discovered. Table~\ref{tab:optimal_solutions} presents the actual optimal solutions found by both the 
cumulative and per-level strategies, showing which specific tiles are used in each solution and how many 
tile placements are required.

\begin{table}[htbp]
\centering
\caption{Decomposition Statistics Summary}
\label{tab:stats}
\small
\begin{tabular}{lcccc}
\toprule
\textbf{Test Case} & \textbf{Composites} & \textbf{Primes} & \textbf{Optimal Solutions} \\
 & \textbf{Found / Processed / Skipped} & \textbf{Found} & \textbf{(Cumulative Strategy)} \\
\midrule
Band-in-blanks  & 1 / 1 / 0  & 1  & 1 (1 step) \\
Mixed-noise     & 14 / 4 / 10 & 20 & 3 (2 steps) \\
Simple-pattern  & 1 / 1 / 0  & 4  & 2 (2 steps) \\
\bottomrule
\end{tabular}
\end{table}

The "Found" column shows the total number of composites discovered during the BFS pruning phase. 
"Processed" indicates how many of these composites actually underwent prime extraction and puzzle solving. 
"Skipped" shows how many composites were excluded through hierarchical filtering because they had already 
been discovered as primes of larger composites. For example, in Mixed-noise, 14 composites were found, 
but only 4 required processing—the remaining 10 (71.4\%) were skipped, demonstrating the effectiveness 
of hierarchical filtering.

\begin{table}[htbp]
\centering
\caption{Composites and Extracted Primes}
\label{tab:composites_primes}
\footnotesize
\begin{tabular}{llcp{7cm}}
\toprule
\textbf{Test Case} & \textbf{Composite} & \textbf{Size} & \textbf{Prime Tiles (organized by BFS Level)} \\
 & \textbf{ID} & & \\
\midrule
Band-in-blanks & \#1 & $2 \times 1$ & Prime is the composite itself: \texttt{1/2} ($2 \times 1$) \\
\midrule
\multirow{3}{*}{Mixed-noise} & \#1 & $2 \times 4$ & 19 primes across multiple levels including \texttt{12--} ($1 \times 4$), \texttt{3-4-} ($1 \times 4$), \texttt{12} ($1 \times 2$), \texttt{1/3} ($2 \times 1$), and others \\
& \#7 & $3 \times 1$ & 1 prime: \texttt{-/Q} ($2 \times 1$) \\
\cmidrule{2-4}
& \#2--\#6, \#8--\#11, \#13 & Various & Skipped (already discovered as primes of Composite \#1) \\
& \#12, \#14 & $2 \times 1$ & Processed but yielded 0 primes (minimal size) \\
\midrule
Simple-pattern & \#1 & $2 \times 2$ & \textbf{Level 1:} \texttt{10}, \texttt{02} ($1 \times 2$), \texttt{1/0}, \texttt{0/2} ($2 \times 1$) \\
\bottomrule
\end{tabular}
\end{table}

This table shows the normalized form of each composite discovered and the complete set of primes 
extracted from it. For Band-in-blanks, the normalized composite \texttt{1/2} ($2 \times 1$) is already at 
minimum size and serves as both the composite and the only prime. For Mixed-noise, 
Composite \#1 ($2 \times 4$) yields 19 primes across multiple BFS levels. Composite \#7 yields 1 prime. 
Composites \#12 and \#14 are at minimum size ($2 \times 1$) and cannot be decomposed further, yielding 0 primes. 
The remaining 10 composites (\#2--\#6, \#8--\#11, \#13) are skipped because they were already identified 
as primes during the processing of Composite \#1, demonstrating hierarchical filtering.

\begin{table}[htbp]
\centering
\caption{Optimal Solutions from Cumulative and Per-Level Strategies}
\label{tab:optimal_solutions}
\footnotesize
\begin{tabular}{lcp{9cm}}
\toprule
\textbf{Test Case} & \textbf{Strategy} & \textbf{Solutions (tile placements required)} \\
\midrule
\multirow{2}{*}{Band-in-blanks} & Cumulative & Solution uses the composite itself: \texttt{1/2} (1 placement) \\
& Level 1 & Same as cumulative \\
\midrule
\multirow{3}{*}{Mixed-noise} & Cumulative & 3 solutions, each requiring 2 placements: (1) \texttt{12--} and \texttt{3-4-}; (2) \texttt{12/3-} and \texttt{--/4-}; (3) \texttt{1/3} and \texttt{2--/--4-} \\
& Level 1 & 1 solution using large tiles (2 placements) \\
& Level 2 & 1 solution using medium tiles (2 placements) \\
\midrule
\multirow{2}{*}{Simple-pattern} & Cumulative & 2 solutions, each requiring 2 placements: (1) \texttt{10} and \texttt{0/2}; (2) \texttt{1/0} and \texttt{02} \\
& Level 1 & Same as cumulative \\
\bottomrule
\end{tabular}
\end{table}

This table presents the optimal tiling solutions discovered by both strategies. The cumulative strategy finds solutions by combining tiles from multiple BFS levels and reports the solutions requiring the minimum number of tile placements. The per-level strategy solves the puzzle independently for each level. For Band-in-blanks, the normalized composite is already minimal and serves as the complete solution with a single placement. For Mixed-noise, Composite \#1 has 3 optimal solutions each using 2 tile placements, with Level 1 and Level 2 each finding 1 solution. The number in parentheses indicates the total number of tile placements required for each solution.

\subsection{Scalability Analysis}

Figure~\ref{fig:scalability} evaluates algorithm scalability from $2 \times 2$ to $32 \times 32$ grids using two test patterns: overlap patterns (full-grid repetition testing inspection) and pruning patterns (small composite embedded in noise testing BFS). All axes use logarithmic scale.

\begin{figure}[htbp]
    \centering
    \includegraphics[width=\textwidth]{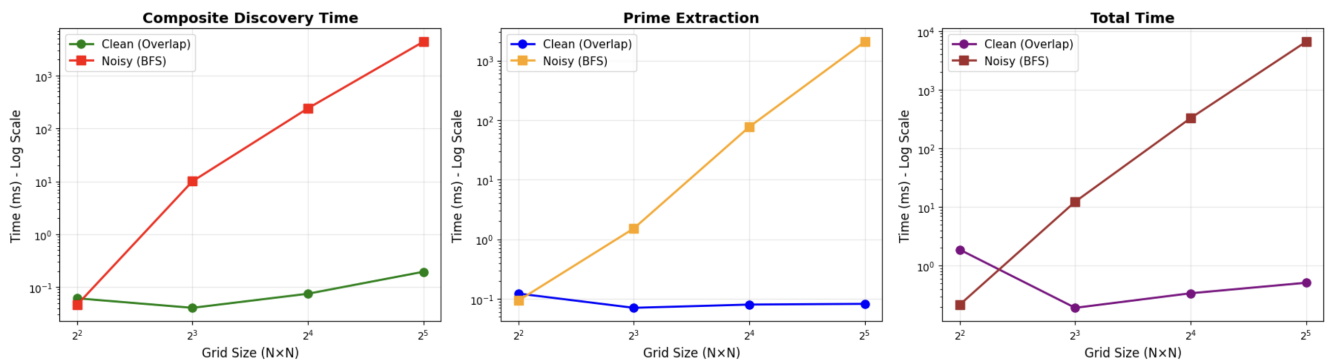}
    \caption{Scalability analysis on log-log scale: overlap patterns remain constant while pruning patterns grow exponentially.}
    \label{fig:scalability}
\end{figure}

\subsection{Discussion}

The dual-strategy approach provides complementary insights. The cumulative strategy finds global MDL, 
ideal for compression scenarios. The per-level strategy reveals granularity trade-offs: Level 1 uses 
fewer custom tiles (minimal assembly, custom fabrication), while deeper levels use more standard units 
(modular parts, complex assembly). Hierarchical filtering proved essential, reducing computation by 
$5.3\times$ in Mixed-noise while maintaining completeness. The selective duplication strategy balances 
completeness (applied in inspection/normalization where critical) with efficiency (omitted in BFS where 
exhaustive exploration suffices). Nested tessellation detection ensures canonical representation and 
identifies reuse opportunities.

Limitations include restriction to rectangular tiles and Manhattan geometry, and exponential worst-case 
puzzle solver complexity. Future work could explore heuristic pruning, alternative solving paradigms 
(ILP, SAT), and approximate tessellations for noisy real-world images.

\section{Related Work}\label{sec:related}

Classical and modern tiling theory provide the conceptual backdrop for our study. Aperiodic phenomena---from Penrose tilings in quasi-crystals to recent mono-tiles and rigorous counterexamples to periodic tiling---demonstrate that not all tessellations admit a finite translational generator; see Penrose observations in quasi-crystals \cite{bursill1985penrose}, aperiodic tile machines \cite{caceres2002aperiodic}, the recent aperiodic monotile \cite{smith2023aperiodic}, and the counterexample to the periodic tiling conjecture \cite{greenfeld2024counterexample}. These works motivate our explicit scope restriction to \emph{exact, axis-aligned, rectangular generators} and clarify why our \textbf{O}-class (out-of-scope) includes aperiodic and non-rectangular cases. Expository and application-oriented accounts of tessellation in mathematics and design further contextualize the space \cite{vzdimalova2020tessellation,chang2018application}.

On the algorithmic side, there is substantial work on decision and parsing problems for tilings and picture languages. Polynomial-time results for specific families (e.g., tiling with squares and packing dominoes) illustrate the value of finely-scoped, exact formulations \cite{10.1145/3597932}. SAT-based parsers and completers for pictures specified by tiling grammars show a complementary, constraint-solving approach to 2-D pattern specification and recognition \cite{pradella2008sat}. In applied recognition, structural descriptions for textile and tile patterns \cite{valiente2004structural} and hierarchical space-tiling models for scene parsing \cite{7091016} address periodic regularities at image level, while tiled convolutional architectures \cite{ngiam2010tiled} use tiling as an architectural bias rather than an object of inference. Our contribution differs in goal and interface: we target \emph{symbolic, grid-aligned arrays} and provide a deterministic two-layer procedure (composite discovery then prime factorization) that returns explicit generating tiles under strict equality and minimality.

A second line of relevant work arises from grid-based reasoning and program-induction benchmarks, notably the Abstraction and Reasoning Corpus (ARC). Recent technical reports and analyses \cite{chollet2024arc} investigate system design, evaluation, and learning strategies for solving discrete reasoning tasks defined on small grids. Our method can be viewed as an interpretable, non-parametric module for the subset of ARC-like tasks that reduce to identifying periodic (possibly layered) structure under exact equality. More generally, debates about the capabilities of modern systems and appropriate benchmarking (e.g., \cite{pfister2025understanding}) underscore the value of transparent procedures with guaranteed behavior on clearly defined problem classes.

In summary, while prior art spans aperiodic tilings \cite{bursill1985penrose,caceres2002aperiodic,greenfeld2024counterexample,smith2023aperiodic}, exact algorithmic fragments and grammar/SAT formulations \cite{10.1145/3597932,pradella2008sat,valiente2004structural,7091016}, architectural uses of tiling \cite{ngiam2010tiled}, and grid reasoning benchmarks \cite{zhut5}, our work contributes a focused, deterministic pipeline for \emph{exact identification of periodic partial tessellations} on discrete planar grids, together with a solvability taxonomy that makes scope and limitations explicit.

\section{Conclusion}\label{sec:conclusions}

We presented a hierarchical method for discovering exact periodic structure in discrete grids. Our approach automatically identifies repeating patterns, reduces them to minimal form, and decomposes them into atomic building blocks—all through a deterministic procedure with provable behavior on well-defined pattern classes.

The method's key contributions are threefold. First, a selective duplication strategy that handles odd-dimensional patterns during inspection and normalization while maintaining efficiency during exhaustive search through natural exploration of even-dimensional descendants. Second, hierarchical filtering that skips redundant computation by recognizing when smaller composites have already been discovered as components of larger ones, achieving an 81\% reduction in processing overhead in our experiments. Third, dual decomposition strategies that reveal both globally optimal solutions and granularity trade-offs between custom-cut tiles and standard modular units.

Our experimental evaluation demonstrates clean resolution of patterns ranging from simple regular structures (resolved via fast inspection) to complex irregular patterns (requiring full BFS exploration). The level-wise organization of discovered tiles enables systematic analysis of decomposition alternatives at different scales, with practical implications for manufacturing, assembly, and material reuse.

We explicitly acknowledge scope boundaries: the method assumes exact equality, axis-aligned periodicity, and rectangular generators. Patterns involving approximation, rotation, or aperiodic structure fall outside this scope by design. These constraints enable deterministic behavior and provable correctness. Future work could explore tolerant matching for noisy patterns, geometric pre-alignment for rotated structures, and non-rectangular generators, each representing a principled extension with clear trade-offs between determinism and generality.

Our contribution is a precise, efficient baseline for exact periodic structure identification with explicit scope definitions. We hope this work serves as both a practical analysis tool and a foundation for systematic extensions beyond exact, axis-aligned settings.

\bibliography{sample}

\begin{thebibliography}{10}
\urlstyle{rm}
\expandafter\ifx\csname url\endcsname\relax
  \def\url#1{\texttt{#1}}\fi
\expandafter\ifx\csname urlprefix\endcsname\relax\def\urlprefix{URL }\fi
\expandafter\ifx\csname doiprefix\endcsname\relax\def\doiprefix{DOI: }\fi
\providecommand{\bibinfo}[2]{#2}
\providecommand{\eprint}[2][]{\url{#2}}

\bibitem{10.1145/3597932}
\bibinfo{author}{Aamand, A.}, \bibinfo{author}{Abrahamsen, M.},
  \bibinfo{author}{Ahle, T.~D.} \& \bibinfo{author}{Rasmussen, P. M.~R.}
\newblock \bibinfo{journal}{\bibinfo{title}{Tiling with squares and packing
  dominos in polynomial time}}.
\newblock {\emph{\JournalTitle{ACM Trans. Algorithms}}}
  \textbf{\bibinfo{volume}{19}}, \doiprefix\url{10.1145/3597932}
  (\bibinfo{year}{2023}).

\bibitem{chang2018application}
\bibinfo{author}{Chang, W.}
\newblock \bibinfo{title}{Application of tessellation in architectural geometry
  design}.
\newblock In \emph{\bibinfo{booktitle}{E3S web of conferences}},
  vol.~\bibinfo{volume}{38}, \bibinfo{pages}{03015} (\bibinfo{organization}{EDP
  Sciences}, \bibinfo{year}{2018}).

\bibitem{greenfeld2024counterexample}
\bibinfo{author}{Greenfeld, R.} \& \bibinfo{author}{Tao, T.}
\newblock \bibinfo{journal}{\bibinfo{title}{A counterexample to the periodic
  tiling conjecture}}.
\newblock {\emph{\JournalTitle{Annals of Mathematics}}}
  \textbf{\bibinfo{volume}{200}}, \bibinfo{pages}{301--363}
  (\bibinfo{year}{2024}).

\bibitem{vzdimalova2020tessellation}
\bibinfo{author}{{\v{Z}}d{\'\i}malov{\'a}, M.}
\newblock \bibinfo{journal}{\bibinfo{title}{Tessellation}}.
\newblock {\emph{\JournalTitle{Faces of Geometry. From Agnesi to Mirzakhani}}}
  \bibinfo{pages}{225--234} (\bibinfo{year}{2020}).

\bibitem{bursill1985penrose}
\bibinfo{author}{Bursill, L.} \& \bibinfo{author}{Ju~Lin, P.}
\newblock \bibinfo{journal}{\bibinfo{title}{Penrose tiling observed in a
  quasi-crystal}}.
\newblock {\emph{\JournalTitle{Nature}}} \textbf{\bibinfo{volume}{316}},
  \bibinfo{pages}{50--51} (\bibinfo{year}{1985}).

\bibitem{smith2023aperiodic}
\bibinfo{author}{Smith, D.}, \bibinfo{author}{Myers, J.~S.},
  \bibinfo{author}{Kaplan, C.~S.} \& \bibinfo{author}{Goodman-Strauss, C.}
\newblock \bibinfo{journal}{\bibinfo{title}{An aperiodic monotile}}.
\newblock {\emph{\JournalTitle{arXiv preprint arXiv:2303.10798}}}
  (\bibinfo{year}{2023}).

\bibitem{caceres2002aperiodic}
\bibinfo{author}{C{\'a}ceres, J.} \& \bibinfo{author}{M{\'a}rquez, A.}
\newblock \bibinfo{journal}{\bibinfo{title}{An aperiodic tiles machine}}.
\newblock {\emph{\JournalTitle{Computational Geometry}}}
  \textbf{\bibinfo{volume}{23}}, \bibinfo{pages}{171--182}
  (\bibinfo{year}{2002}).

\bibitem{7091016}
\bibinfo{author}{Wang, S.}, \bibinfo{author}{Wang, Y.} \& \bibinfo{author}{Zhu,
  S.-C.}
\newblock \bibinfo{journal}{\bibinfo{title}{Learning hierarchical space tiling
  for scene modeling, parsing and attribute tagging}}.
\newblock {\emph{\JournalTitle{IEEE Transactions on Pattern Analysis and
  Machine Intelligence}}} \textbf{\bibinfo{volume}{37}},
  \bibinfo{pages}{2478--2491}, \doiprefix\url{10.1109/TPAMI.2015.2424880}
  (\bibinfo{year}{2015}).

\bibitem{ngiam2010tiled}
\bibinfo{author}{Ngiam, J.} \emph{et~al.}
\newblock \bibinfo{journal}{\bibinfo{title}{Tiled convolutional neural
  networks}}.
\newblock {\emph{\JournalTitle{Advances in neural information processing
  systems}}} \textbf{\bibinfo{volume}{23}} (\bibinfo{year}{2010}).

\bibitem{valiente2004structural}
\bibinfo{author}{Valiente, J.~M.}, \bibinfo{author}{Albert, F.},
  \bibinfo{author}{Carretero, C.} \& \bibinfo{author}{Gomis, J.~M.}
\newblock \bibinfo{title}{Structural description of textile and tile pattern
  designs using image processing}.
\newblock In \emph{\bibinfo{booktitle}{Proceedings of the 17th International
  Conference on Pattern Recognition, 2004. ICPR 2004.}},
  vol.~\bibinfo{volume}{1}, \bibinfo{pages}{498--503}
  (\bibinfo{organization}{IEEE}, \bibinfo{year}{2004}).

\bibitem{pradella2008sat}
\bibinfo{author}{Pradella, M.} \& \bibinfo{author}{Reghizzi, S.~C.}
\newblock \bibinfo{journal}{\bibinfo{title}{A sat-based parser and completer
  for pictures specified by tiling}}.
\newblock {\emph{\JournalTitle{Pattern Recognition}}}
  \textbf{\bibinfo{volume}{41}}, \bibinfo{pages}{555--566}
  (\bibinfo{year}{2008}).

\bibitem{grunbaum1987tilings}
\bibinfo{author}{Gr{\"u}nbaum, B.} \& \bibinfo{author}{Shephard, G.~C.}
\newblock \emph{\bibinfo{title}{Tilings and patterns}}
  (\bibinfo{publisher}{Courier Dover Publications}, \bibinfo{year}{1987}).

\bibitem{phillips2014tessellation}
\bibinfo{author}{Phillips, D.}
\newblock \bibinfo{journal}{\bibinfo{title}{Tessellation}}.
\newblock {\emph{\JournalTitle{Wiley Interdisciplinary Reviews: Computational
  Statistics}}} \textbf{\bibinfo{volume}{6}}, \bibinfo{pages}{202--209}
  (\bibinfo{year}{2014}).

\bibitem{zhou2012hierarchical}
\bibinfo{author}{Zhou, X.} \emph{et~al.}
\newblock \bibinfo{title}{Hierarchical overlapped tiling}.
\newblock In \emph{\bibinfo{booktitle}{Proceedings of the Tenth International
  Symposium on Code Generation and Optimization}}, \bibinfo{pages}{207--218}
  (\bibinfo{year}{2012}).

\bibitem{10.1145/1290672.1290688}
\bibinfo{author}{Hajiaghayi, M.~T.}, \bibinfo{author}{Kleinberg, R.~D.},
  \bibinfo{author}{R\"{a}cke, H.} \& \bibinfo{author}{Leighton, T.}
\newblock \bibinfo{journal}{\bibinfo{title}{Oblivious routing on
  node-capacitated and directed graphs}}.
\newblock {\emph{\JournalTitle{ACM Trans. Algorithms}}}
  \textbf{\bibinfo{volume}{3}}, \bibinfo{pages}{51–es},
  \doiprefix\url{10.1145/1290672.1290688} (\bibinfo{year}{2007}).

\bibitem{10.1145/1383369.1383372}
\bibinfo{author}{Korman, A.} \& \bibinfo{author}{Peleg, D.}
\newblock \bibinfo{journal}{\bibinfo{title}{Dynamic routing schemes for graphs
  with low local density}}.
\newblock {\emph{\JournalTitle{ACM Trans. Algorithms}}}
  \textbf{\bibinfo{volume}{4}}, \doiprefix\url{10.1145/1383369.1383372}
  (\bibinfo{year}{2008}).

\bibitem{chollet2024arc}
\bibinfo{author}{Chollet, F.}, \bibinfo{author}{Knoop, M.},
  \bibinfo{author}{Kamradt, G.} \& \bibinfo{author}{Landers, B.}
\newblock \bibinfo{journal}{\bibinfo{title}{Arc prize 2024: Technical report}}.
\newblock {\emph{\JournalTitle{arXiv preprint arXiv:2412.04604}}}
  (\bibinfo{year}{2024}).

\bibitem{legris2024h}
\bibinfo{author}{LeGris, S.}, \bibinfo{author}{Vong, W.~K.},
  \bibinfo{author}{Lake, B.~M.} \& \bibinfo{author}{Gureckis, T.~M.}
\newblock \bibinfo{journal}{\bibinfo{title}{H-arc: A robust estimate of human
  performance on the abstraction and reasoning corpus benchmark}}.
\newblock {\emph{\JournalTitle{arXiv preprint arXiv:2409.01374}}}
  (\bibinfo{year}{2024}).

\bibitem{pfister2025understanding}
\bibinfo{author}{Pfister, R.} \& \bibinfo{author}{Jud, H.}
\newblock \bibinfo{journal}{\bibinfo{title}{Understanding and benchmarking
  artificial intelligence: Openai's o3 is not agi}}.
\newblock {\emph{\JournalTitle{arXiv preprint arXiv:2501.07458}}}
  (\bibinfo{year}{2025}).

\bibitem{zhut5}
\bibinfo{author}{Zhu, S.}, \bibinfo{author}{Geng, S.} \& \bibinfo{author}{Chen,
  U.~L.}
\newblock \bibinfo{journal}{\bibinfo{title}{T5-arc: Test-time training for
  transductive transformer models in arc-agi challenge}}.
\newblock {\emph{\JournalTitle{OpenReview preprint
  https://openreview.net/forum?id=TtGONY7UKy}}} .

\bibitem{ouellette2024towards}
\bibinfo{author}{Ouellette, S.}
\newblock \bibinfo{journal}{\bibinfo{title}{Towards efficient neurally-guided
  program induction for arc-agi}}.
\newblock {\emph{\JournalTitle{arXiv preprint arXiv:2411.17708}}}
  (\bibinfo{year}{2024}).

\bibitem{mernik2005and}
\bibinfo{author}{Mernik, M.}, \bibinfo{author}{Heering, J.} \&
  \bibinfo{author}{Sloane, A.~M.}
\newblock \bibinfo{journal}{\bibinfo{title}{When and how to develop
  domain-specific languages}}.
\newblock {\emph{\JournalTitle{ACM computing surveys (CSUR)}}}
  \textbf{\bibinfo{volume}{37}}, \bibinfo{pages}{316--344}
  (\bibinfo{year}{2005}).

\bibitem{franzenllm}
\bibinfo{author}{Franzen, D.}, \bibinfo{author}{Disselhoff, J.} \&
  \bibinfo{author}{Hartmann, D.}
\newblock \bibinfo{journal}{\bibinfo{title}{The llm architect: Solving arc-agi
  is a matter of perspective}}.
\newblock {\emph{\JournalTitle{Preprint GitHub.io}}} .

\end{thebibliography}

\end{document}